\definecolor{box_color}{rgb}{.8,.8,.8}
\newtheorem{lemma}{Lemma}
\newtheorem{prop}{Proposition}
\newtheorem{corollary}{Corollary}
\newtheorem{fact}{Fact}
\newtheorem{remark}{Remark}
\newtheorem{assumption}{Assumption}
\newtheorem{ass}{C.}
\def\begcen{\begin{center}}
\def\endcen{\end{center}}
\newcommand{\dq}[1]{{\partial_{\bfq}}{#1}}
\def\L2{{\cal L}_2}
\def\L2e{{\cal L}_{2e}}
\def\rea{\mathbb{R}}
\def\dq{\dot{q}}
\def\dtheta{\dot{\theta}}
\def\ddtheta{\ddot{\theta}}
\def\dotp{\dot{p}}
\def\ds{\dot{s}}
\def\dz{\dot{z}}
\def\phiv{\varphi}
\def\dphiv{\dot{\varphi}}
\def\begequarr{\begin{eqnarray}}
\def\endequarr{\end{eqnarray}}
\def\begequarrs{\begin{eqnarray*}}
\def\endequarrs{\end{eqnarray*}}
\def\begarr{\begin{array}}
\def\endarr{\end{array}}
\def\begequ{\begin{equation}}
\def\endequ{\end{equation}}
\def\begdes{\begin{description}}
\def\enddes{\end{description}}
\def\begenu{\begin{enumerate}}
\def\begite{\begin{itemize}}
\def\endite{\end{itemize}}
\def\endenu{\end{enumerate}}
\def\lef[{\left[\begin{array}}
\def\rig]{\end{array}\right]}
\def\begcen{\begin{center}}
\def\endcen{\end{center}}
\def\begrem{\begin{remark}\rm}
\def\endrem{\end{remark}}
\def\begassum{\begin{assumption}}
\def\endassum{\end{assumption}}
\def\begassums{\begin{assumption*}}
\def\endassums{\end{assumption*}}
\def\begassu{\begin{ass}}
\def\endassu{\end{ass}}
\def\beglem{\begin{lemma}}
\def\endlem{\end{lemma}}
\def\begcor{\begin{corollary}}
\def\endcor{\end{corollary}}
\def\begfac{\begin{fact}}
\def\endfac{\end{fact}}
\title{\LARGE \bf
Force and state-feedback control for robots with non-collocated environmental and actuator forces*
}
\author{Alejandro Donaire$^{1}$, Luigi Villani$^{2}$, Fanny Ficuciello$^{2}$, Juan Tomassini$^{3}$, Bruno Siciliano$^{2}$% <-this % stops a space
\thanks{*The research leading to these results has been supported by the RoDyMan project, which has received funding from the European Research Council FP7 Ideas under Advanced Grant agreement number 320992. The authors are solely responsible for the content of this manuscript. J. Tomassini acknowledges the SeCyT-UNR (the Secretary for Science and Technology of the National University of Rosario) and ANPCyT for their financial support through projects PID-UNR 1ING502 and FONCyT PICT 2012 Nr. 2471.}% <-this % stops a space
\thanks{$^{1}$A. Donaire is with School of Engineering, The University of Newcastle, 2308 Callaghan, NSW, Australia,
        {\tt\small \{alejandro.donaire\}@newcastle.edu.au}}%
\thanks{$^{2}$L. Villani, F. Ficuciello and B. Siciliano are with PRISMA Lab, Department of Electrical Engineering and Information Technology, University of Naples Federico II,  80125 Naples, Italy,
        {\tt\small \{luigi.villani,fanny.ficuciello,bruno.siciliano\}} {\tt\small @unina.it}}%
\thanks{$^{3}$J. Tomassini is with Lab of Automation and Control, School of Electronic Engineering, National University of Rosario, S2000EKE Rosario, Argentina,
        {\tt\small tomajuan@fceia.unr.edu.ar}}%
}
\begin{document}

\maketitle
\thispagestyle{empty}
\pagestyle{empty}

%%%%%%%%%%%%%%%%%%%%%%%%%%%%%%%%%%%%%%%%%%%%%%%%%%%%%%%%%%%%%%%%%%%%%%%%%%%%%%%%
\begin{abstract}
In this paper, we present an impedance control design for multi-variable linear and nonlinear robotic systems. The control design considers force and state feedback to improve the performance of the closed loop. Simultaneous feedback of forces and states allows the controller for an extra degree of freedom to approximate the desired impedance port behaviour. A numerical analysis is used to demonstrate the desired impedance closed-loop behaviour.
\end{abstract}
%%%%%%%%%%%%%%%%%%%%%%%%%%%%%%%%%%%%%%%%%%%%%%%%%%%%%%%%%%%%%%%%%%%%%%%%%%%%%%%%
\section{INTRODUCTION}
%%%%%%%%%%
One challenging control design for robotic systems is that where the physical interaction with the environment or humans must be shaped into a desired behaviour of a power port---the interaction point at which the robot and the environment share a force and a velocity. This control design problem is called \emph{impedance control}. Since its early developments (see e.g. \cite{Hogan1985,Chiaverini1999,Siciliano1999}), the research interest on shaping the impedance of the robot from the interaction port has increased as new emerging applications require a better handling of those interactions \cite{Buerger2007,Schaffer2007,Calanca2017,Magrini2015,Ficuciello2015}.  

In general, large part of the control designs for robots-human/environment physical interactions use some kind of force feedback and control algorithms to ensure that the impedance seen from the interaction port of the robot approximates a desired impedance. In most cases, the target impedance is a mass-spring-damper system behaviour. In addition, the controller is designed to ensure passivity of the closed loop, and therefore stability when the robot interacts with an environment that is also passive (see e.g. \cite{Colgate1989}).

As point it out in \cite{Ott2008}, the classical approach to impedance control design neglects the joint elasticity of the robot, and the controller performance usually deteriorates when the design is directly applied to flexible joint robots. The underactuated nature of the flexible joint robots pose a challenging problem since the interaction and actuators forces are non-collocated. The work in \cite{Schaffer2007,Ott2008} proposes  an impedance control design for flexible joint robots using the passivity-based approach (see \cite{Ortega2017} for a survey on this topic). That control design assumes that the collocated states (motor side) and joint torques, and their first derivatives, are available as information to implement the control law.

In this paper, we propose to use both force feedback and state feedback to design an impedance controller for robots with flexible joints. The controller is designed for both linear and nonlinear case (constant mass matrix, and configuration dependent mass matrix). In both robot dynamic models, we prove that the closed loop can be written in the form of a mechanical system, and the inertia, stiffness and damping matrices can be shaped by a suitable selection of the control gains. We also prove that the closed loop preserves the passivity properties at the interaction port of the robot. The simultaneous feedback of both forces and states provides extra degree of freedom to select the controller gains and therefore more freedom to tune the closed-loop impedance such that it resembles the desired impedance.

The rest of the paper is organised as follows. In Section~\ref{probform}, we formulate the control problem. In Section~\ref{mainresult}, we present the main result of the paper, for both linear and nonlinear cases. Supported by numeral analysis, we discuss the performance of the closed loop in Section~\ref{secna}. The conclusions are presented in Section~\ref{conclusion}.

\section{PROBLEM FORMULATION} \label{probform}
Consider the robot dynamics described by the Euler-Lagrange equations
\begequarr
\hspace{-6mm} M(q) \ddot{q}\!+C(q,\dot{q}) \dot{q} \!-\! K (\theta-q) \!-\! D ( \dot{\theta}-\dot{q}) \!+\! {\nabla_q V(q)} \hspace{-3mm} &=& \hspace{-2mm} \tau_e \label{el1} \\
J \ddtheta \!+\! K (\theta-q) \!+\! D (\dot{\theta}-\dot{q}) \hspace{-3mm}&=&\hspace{-2mm} \tau \label{el2}
\endequarr
with $\dim(q)=n$, $\dim(\theta)=n$, $\tau_e \in \rea^n$, $\tau \in \rea^n$, $M:\rea^n \to \rea^{n\times n}$, $C:\rea^{n\times n} \to \rea^{n\times n}$ is the Coriolis matrix, $V: \rea^n \to \rea$ represents the potential energy due to gravity, and $J$, $K$ and $D$ are positive constant matrices of adequate dimension. 

The objective is to design a control law $\tau=\alpha(q,\dq,\theta,\dtheta,\tau_e)$ that shapes the mass, damping and compliance matrices of the closed loop, and simultaneously preserves the passivity properties of the open loop with input $\tau_e$ and output $\dq$. It is desirable that the dynamics of the closed loop approximates the target dynamics
\begequ \label{targdyn}
M_d \ddot{q} + K_d \dot{q} + D_d q = \tau_e,
\endequ 
where $M_d$, $K_d$ and $D_d$ are the desired mass, stiffness and damping matrices.

For the linear case, the control design can be posed as to find a control law that satisfies two objectives:
\begin{itemize}
\item[\bf O1.] The closed-loop transfer function (admittance) $Y(s)= \frac{ \mathcal{L}[\dot q(t)] }{\mathcal{L}[\tau_e(t)]}$ is positive real (i.e. passive).
\item[\bf O2.] The admittance $Y(s)$ approximates the target admittance
\begequ \label{targadm}
Y_d(s)=\frac{s}{M_d s^2 + K_d s + D_d} ,
\endequ 
\end{itemize}
%====================================================================
\section{MAIN RESULT} \label{mainresult}
%====================================================================
In this section, we present a control design that achieves the control objectives formulated in  the Section \ref{probform}.
%====================================================================
\subsection{Port-Hamiltonian form}
%====================================================================
We develop our control design using the port-Hamiltonian framework, instead of its equivalent Euler-Lagrange. Therefore, we formulate the dynamics in the port-Hamiltonian (pH) form. We use the Legendre transformation $p=M(q)\dq$ and $s=J\dtheta$ to write the open-loop system \eqref{el1}-\eqref{el2} in the pH form as follows
\begequarr
\left[ \begarr{c} \dot q \\ \dot \theta \\ \dot p \\ \dot s \endarr \right] \hspace{-3mm}&=& \hspace{-3mm} \left[ \begarr{cccc} 0_{n\times n} & 0_{n\times n} & I_{n\times n} & 0_{n\times n} \\ 0_{n\times n} & 0_{n\times n} & 0_{n\times n} & I_{n\times n} \\ -I_{n\times n} & 0_{n\times n} & -D & D \\ 0_{n\times n} & -I_{n\times n} & D & -D \endarr \right] \nabla H_{\tt OL} \nonumber \\
&& + \left[ \begarr{c} 0_{n\times 1} \\ 0_{n\times 1} \\ \tau_e \\ \tau \endarr \right], \label{phol}
\endequarr
with the open-loop total energy
\begequarr
H_{\tt OL} &=& \frac 12 p^\top M^{-1}(q) p + \frac 12 s^\top J^{-1} s + \frac 12 (\theta-q)^\top K  \nonumber \\
&& \times (\theta-q) + {V(q)}
\endequarr
%==============
\subsection{Control design for linear multivariable systems.}
%==============
We consider first the linear case for the dynamics \eqref{phol}, and thus we make the following assumption:
\begin{itemize}
\item[{\bf A1.}] The mass matrix is constant, i.e. $M(q)=M$.
\end{itemize}

Under assumption {\bf A1}, the system \eqref{phol} (equivalently \eqref{el1} and \eqref{el2}) takes the pH form as follows
\begequarr
\dq &=& M^{-1} p \label{mechl1}\\ 
\dtheta &=& J^{-1} s \label{mechl2} \\
\dotp &=& - \nabla_q V(q) + K (\theta-q) + D (J^{-1}s-M^{-1}p) \nonumber \\
&& +\tau_e \label{mechl3}  \\
\ds &=& -K (\theta-q) - D (J^{-1}s-M^{-1}p) +\tau \label{mechl4}
\endequarr
We now present the first result, which proposes a controller to achieve the control objective required in the problem formulation.
\begin{prop} \label{proplin} Consider the LTI MIMO system \eqref{phol}, equivalently \eqref{el1}-\eqref{el2}, in closed loop with the control law
\begequarr \label{impcontrlin}
\tau=K_F \; \tau_e - K_G \; \tau_{a} - K_F \nabla_qV(q) + K_H \; \tau_u
\endequarr
where
\begin{equation}\label{taua}
\tau_{a}=  K (\theta-q) + D( \dtheta -\dq),
\end{equation}
and $K_F$, $K_G$ and $K_H$ are constant matrices that satisfy
\begequarr
K_F &=& -JK^{-1} (K_e-K) M^{-1} \label{kflin},\\
K_G &=& \left[ JK^{-1}K_e J_e^{-1} - K_F -I_n \right] \label{kglin},\\
K_H &=& J K^{-1}K_eJ_e^{-1}\label{khlin},
\endequarr
and $\tau_u$ is an input available for further control loops.
The constant matrices $J_e=J_e^\top>0$ and $K_e=K_e^\top>0$ are free to be chosen such that $D_e=DK^{-1}K_e$ is symmetric and positive definite. Then, under assumption {\bf A1}, the following statements hold true:
\begite
\item[$i).$] The closed-loop dynamics can be written in pH form as follows
\begequarr 
\left[ \begarr{c} \dq \\ \dphiv \\ \dotp \\ \dz \endarr \right] \hspace{-3mm} &=& \hspace{-3mm} \left[ \begarr{cccc} 0_{n\times n} & 0_{n\times n} & I_{n\times n} & 0_{n\times n} \\ 0_{n\times n} & 0_{n\times n} & 0_{n\times n} & I_{n\times n} \\ -I_{n\times n} & 0_{n\times n} & -D_e & D_e \\ 0_{n\times n} & -I_{n\times n} & D_e & -D_e \endarr \right] \nabla H_{\tt CL} \nonumber \\
&& + \left[ \begarr{c} 0_{n\times 1} \\ 0_{n\times 1} \\ \tau_e \\ \tau_u \endarr \right], \label{phcl}
\endequarr
with
\begequarr 
H_{\tt CL}&=&\frac 12 p^\top M^{-1} p + \frac 12 z^\top J_e^{-1} z + \frac 12 (\phiv-q)^\top K_e \nonumber \\
&& \times (\phiv-q)+ V(q) . \label{hcl}
\endequarr

The new generalised position $\phiv$ and momentum $z$ are obtained using the transformations
\begequarr
\hspace{-8mm} \phiv \hspace{-2mm}&=& \hspace{-2mm} K_e^{-1} (K_e-K)q+K_e^{-1}K\theta  \label{coc1} \\
\hspace{-8mm} z \hspace{-2mm}&=& \hspace{-2mm} J_eK_e^{-1}(K_e-K)M^{-1}p+J_eK_e^{-1}KJ^{-1}s \label{coc2}
\endequarr
\item[$ii).$] The closed-loop dynamics is passive with inputs $(\tau_e,\tau_u)$, outputs $(\dq,\dphiv)$ and storage function $H_{\tt CL}$.
\endite
\end{prop}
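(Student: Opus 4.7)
My plan is to establish (i) by direct substitution of the control law into the open-loop pH equations followed by the coordinate change (\ref{coc1})--(\ref{coc2}), and to deduce (ii) by inspecting the interconnection-dissipation structure obtained in (i).

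For (i), I would first substitute (\ref{impcontrlin}) into (\ref{mechl4}). Using (\ref{kglin}) in the form $I_n + K_G = JK^{-1}K_e J_e^{-1} - K_F$, the equation for $\dot s$ collapses to $\dot s = -(JK^{-1}K_e J_e^{-1} - K_F)\tau_a + K_F(\tau_e - \nabla_q V(q)) + K_H \tau_u$, with $\tau_a$ as in (\ref{taua}). I would then differentiate (\ref{coc1})--(\ref{coc2}) in time and substitute $\dot q = M^{-1}p$, $\dot\theta = J^{-1}s$, together with $\dot p$ from (\ref{mechl3}) and the simplified $\dot s$. Direct inspection of (\ref{coc2}) gives $\dot\varphi = K_e^{-1}(K_e-K)M^{-1}p + K_e^{-1}KJ^{-1}s = J_e^{-1} z$, matching the second row of (\ref{phcl}). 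For $\dot z$, the definitions (\ref{kflin}) and (\ref{khlin}) yield the simplifications $J_e K_e^{-1} K J^{-1} K_F = -J_e K_e^{-1}(K_e-K)M^{-1}$ and $J_e K_e^{-1} K J^{-1} K_H = I_n$; using these, the coefficients of $\tau_e$ and $\nabla_q V(q)$ cancel identically, the coefficient of $\tau_u$ reduces to $I_n$, and the coefficient of $\tau_a$ reduces to $-I_n$, producing $\dot z = -\tau_a + \tau_u$.

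The main algebraic step, and the one I expect to be the crux, is then re-expressing $\tau_a$ and (\ref{mechl3}) purely in the new coordinates so that they align with (\ref{phcl}) via $\nabla H_{\tt CL}$. Two identities do the work: $K_e(\varphi - q) = K(\theta - q)$, which is immediate from (\ref{coc1}), and $D_e(J_e^{-1}z - M^{-1}p) = D(J^{-1}s - M^{-1}p)$, which follows from (\ref{coc2}) together with $D_e = DK^{-1}K_e$ after the cancellation $DK^{-1}K = D$. With these, $\tau_a = K_e(\varphi - q) + D_e(J_e^{-1}z - M^{-1}p)$, so that $\dot z = -K_e(\varphi-q) + D_e M^{-1}p - D_e J_e^{-1}z + \tau_u$, which is the fourth row of (\ref{phcl}) under $\nabla_\varphi H_{\tt CL} = K_e(\varphi-q)$, $\nabla_p H_{\tt CL} = M^{-1}p$, $\nabla_z H_{\tt CL} = J_e^{-1}z$. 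Applied to (\ref{mechl3}) the same identities yield the third row of (\ref{phcl}), since $-\nabla_q H_{\tt CL} = K_e(\varphi-q) - \nabla_q V(q)$; and $\dot q = M^{-1}p = \nabla_p H_{\tt CL}$ is already the first row. This proves (i).

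For (ii), I would observe that the interconnection matrix in (\ref{phcl}) decomposes as $\mathcal{J} - \mathcal{R}$ with $\mathcal{J}$ skew-symmetric and the dissipation block of $\mathcal{R}$ equal to $\begin{bmatrix} D_e & -D_e \\ -D_e & D_e \end{bmatrix}$, which is symmetric positive semidefinite because the associated quadratic form evaluates to $(v_1 - v_2)^\top D_e (v_1 - v_2) \ge 0$ for all $v_1, v_2$, using $D_e = D_e^\top > 0$. Differentiating $H_{\tt CL}$ along (\ref{phcl}) then gives $\dot H_{\tt CL} = -(M^{-1}p - J_e^{-1}z)^\top D_e (M^{-1}p - J_e^{-1}z) + \tau_e^\top \dot q + \tau_u^\top \dot\varphi \le \tau_e^\top \dot q + \tau_u^\top \dot\varphi$, and integrating in time (with $V$ bounded below, so that $H_{\tt CL}$ is nonnegative up to an additive constant) yields passivity with inputs $(\tau_e, \tau_u)$, outputs $(\dot q, \dot\varphi)$, and storage function $H_{\tt CL}$.
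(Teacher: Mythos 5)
Your proposal is correct and follows essentially the same route as the paper: direct substitution of the control law, verification row-by-row that the transformed coordinates $(\varphi,z)$ satisfy the closed-loop pH equations via the identities $K_e(\varphi-q)=K(\theta-q)$ and $D_e(J_e^{-1}z-M^{-1}p)=D(J^{-1}s-M^{-1}p)$, and then the energy-balance computation showing $\dot H_{\tt CL}\le \dot q^\top\tau_e+\dot\varphi^\top\tau_u$ through the positive semidefinite block $\left[\begin{smallmatrix} D_e & -D_e\\ -D_e & D_e\end{smallmatrix}\right]$. Your organization of the $\dot z$ computation (simplifying $\dot s$ first and isolating the coefficient cancellations of $\tau_e$, $\nabla_q V$, $\tau_a$ and $\tau_u$) is a somewhat cleaner bookkeeping of the same algebra, and your remark that $H_{\tt CL}$ must be bounded below to serve as a storage function is a point the paper leaves implicit.
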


\begin{proof}
Note that under assumption {\bf A1}, the system \eqref{phol} can be written as the dynamics \eqref{mechl1}-\eqref{mechl4}. To prove the claim in $i)$, we will show that the dynamics \eqref{phol} with the control law \eqref{impcontrlin} matched the closed-loop dynamics \eqref{phcl}. First, we show that the first line of \eqref{phcl} is
\begequarr
\dot q &=& M^{-1} p =\nabla_p H_{\tt CL}, \label{ee1}
\endequarr
which exactly matches \eqref{mechl1}.

We consider now the change of coordinates \eqref{coc1}, and we obtain the dynamics of $\phiv$ by computing the derivative of \eqref{coc1} with respect to time as follows
\begequarr
\dot \phiv &=& K_e^{-1} (K_e-K) \dot q+K_e^{-1}K \dot \theta \nonumber \\
&=&  K_e^{-1}(K_e-K)M^{-1}p+ K_e^{-1}KJ^{-1}s  \nonumber \\
&=&J_e^{-1} \left[ J_eK_e^{-1}(K_e-K)M^{-1}p+J_eK_e^{-1}KJ^{-1}s \right] \nonumber \\
&\equiv& J_e^{-1} z = \nabla_z H_{\tt CL}, \label{ee2}
\endequarr
which is the second line of \eqref{phcl}.

Now, from \eqref{mechl3} and using \eqref{coc1} and \eqref{coc2}, we obtain
\begequarr
\dotp &=& -\nabla_qV(q)  + K (\theta-q) + D (J^{-1}s-M^{-1}p) +\tau_e \nonumber  \\
 &\equiv&  -\nabla_qV(q) + K_e (\phiv-q) + D K^{-1} K_e ( J_e^{-1}z \nonumber \\
 && -M^{-1}p) +\tau_e \nonumber  \\
&\equiv& -\nabla_qV(q) + K_e (\phiv-q) + D_e  (J_e^{-1}z-M^{-1}p ) +\tau_e \nonumber \\
&\equiv& - \nabla_q H_{\tt CL} - D_e \nabla_p H_{\tt CL} + D_e \nabla_z H_{\tt CL}  + \tau_e, \label{ee3}
\endequarr
which is the third line of \eqref{phcl}.

Similarly, we obtain the dynamics of $z$ by computing the derivative of \eqref{coc2} with respect to time as follows
\begequarr
\dot z &=& J_eK_e^{-1}(K_e-K)M^{-1} \dot p+J_eK_e^{-1}KJ^{-1} \dot s  \nonumber  \\
 &=&J_eK_e^{-1}(K_e-K)M^{-1} \left[ -\nabla_qV(q) + K (\theta-q) \right. \nonumber \\
 && \left. + D (J^{-1}s-M^{-1}p) +\tau_e  \right] + J_eK_e^{-1}KJ^{-1} \nonumber \\
 &&  \times \left[ -K (\theta-q) - D (J^{-1}s-M^{-1}p) +\tau \right]   \nonumber  \\
 &=&J_eK_e^{-1}(K_e-K)M^{-1} \left[ -\nabla_qV(q) + K (\theta-q) \nonumber \right. \\
 && \left. + D (J^{-1}s-M^{-1}p) +\tau_e  \right] + J_eK_e^{-1}KJ^{-1} \nonumber \\
 && \times \left[ -K (\theta-q) - D (J^{-1}s-M^{-1}p) \right] + J_eK_e^{-1}  \nonumber  \\
&&  \times KJ^{-1} \left\{  K_F \left[\tau_e -\nabla_qV(q) \right] + K_G \left[- K (\theta-q)  \right. \right. \nonumber \\
&& \left. \left. - D(\dtheta-\dq ) \right] + K_H \; \tau_u   \right\}   \nonumber  \\
&=& \left[ J_eK_e^{-1}(K_e-K)M^{-1} - J_eK_e^{-1}KJ^{-1} - J_eK_e^{-1} \right. \nonumber \\
&& \left. \times K J^{-1} K_G \right]  \left[  K (\theta-q) + D (J^{-1}s-M^{-1}p)  \right] \nonumber \\
 && +  \left[ J_eK_e^{-1}(K_e-K)M^{-1} + J_eK_e^{-1}KJ^{-1} K_F  \right] \nonumber \\
 && \times  \left[\tau_e -\nabla_qV(q) \right]  \;+\; \tau_u  \nonumber  \\
&=&  -K (\theta-q) - D (M^{-1}p-J^{-1}s) \;+\; \tau_u  \nonumber \\
&\equiv&  -K_e (\phiv-q) - D_e(J_e^{-1}z-M^{-1}p) \;+\; \tau_u \nonumber \\
&\equiv&  - \nabla_\phiv H_{\tt CL} + D_e \nabla_p H_{\tt CL} - D_e \nabla_z H_{\tt CL} \;+\; \tau_u \label{ee4}
\endequarr
which is the last line of \eqref{phcl}. Finally, we obtain the dynamics \eqref{phcl} by writing \eqref{ee1}-\eqref{ee4} in matrix form.

To prove the passivity property claimed in $ii)$, we use the Hamiltonian \eqref{hcl} as storage function, and we compute its time derivative as follows
\begequarr 
\dot H_{\tt CL} &=&[\nabla_p H_{\tt CL}]^\top  \dot p + [\nabla_z H_{\tt CL}]^\top \dot z + [\nabla_q H_{\tt CL}]^\top \dot q \nonumber \\
&& + [\nabla_\phiv H_{\tt CL}]^\top  \dot \phiv \nonumber \\
&=& [\nabla_p H_{\tt CL}]^\top  \left[ - \nabla_q H_{\tt CL} - D_e \nabla_p H_{\tt CL} + D_e \nabla_z H_{\tt CL} \right. \nonumber \\
&& \left. + \tau_e \right] + [ \nabla_z H_{\tt CL}]^\top \left[ - \nabla_\phiv H_{\tt CL} + D_e \nabla_p H_{\tt CL} \right. \nonumber \\
&& \left. - D_e \nabla_z H_{\tt CL} \;+\; \tau_u \right] + [\nabla_q H_{\tt CL}]^\top \nabla_p H_{\tt CL} \nonumber \\
&&  + [ \nabla_\phiv H_{\tt CL} ]^\top   \nabla_z H_{\tt CL}  \nonumber \\
&=&-  [\nabla_p H_{\tt CL}]^\top  D_e \nabla_p H_{\tt CL} + 2 [\nabla_p H_{\tt CL}]^\top  D_e  \nabla_z H_{\tt CL} \nonumber \\
&& -  [ \nabla_z H_{\tt CL}]^\top  D_e \nabla_z H_{\tt CL} + [\nabla_p H_{\tt CL}]^\top  \tau_e \nonumber \\
&=&-  \left[ \hspace{-2mm} \begin{array}{cc} \nabla_p H_{\tt CL} & \nabla_z H_{\tt CL} \end{array} \hspace{-2mm} \right]^\top \left[ \hspace{-2mm} \begin{array}{cc} D_e & -D_e \\ -D_e & D_e \end{array} \hspace{-2mm} \right]  \left[ \hspace{-2mm} \begin{array}{cc} \nabla_p H_{\tt CL} \\ \nabla_z H_{\tt CL} \end{array} \hspace{-2mm} \right] \nonumber \\
&& + [\nabla_p H_{\tt CL}]^\top  \tau_e +  [ \nabla_z H_{\tt CL}]^\top \; \tau_u  \nonumber \\
&\leq& \dot q^\top \tau_e \;+\;  \dphiv^\top \; \tau_u ,
\endequarr
which implies the system \eqref{phcl} is passivity with inputs $(\tau_e,\tau_u)$ and outputs $(\dot q,\dphiv)$.
\end{proof}

\begin{remark}\label{remparam} Note that, conversely to \eqref{kflin}-\eqref{khlin}, the triplet $(J_e,K_e,D_e)$ of the desired mass matrix, desired stiffness matrix and the desired damping matrix  can be written as function of the gains $K_F$ and $K_G$ as follows
\begequarr
J_e &=& (K_F+K_G+I_n)^{-1}(J-K_FM), \\ 
K_e &=& KJ^{-1}(J-K_FM), \\
D_e &=& DJ^{-1}(J-K_FM),
\endequarr
which univocally relate the gains of the controller \eqref{impcontrlin} and the parameters of the closed-loop pH system \eqref{phcl}. The pH form of the closed loop provides a clear physical interpretation of its dynamics, and how the desired mass, stiffness and dissipation matrices, ie. the triple $(J_e,K_e,De)$, affect the control gains $(K_F,K_G,K_H)$.
\end{remark}

A representation of the control system structure is shown in Figure~\ref{figcontrolsystem}, where the gravitational forces have been neglected since we illustrate the motion in the horizontal plane. In this case, the controller is computed by using the feedback signals of the joints and external forces (or torques). 
\begin{figure}[htbp]
\begin{center}
\includegraphics[scale=.33]{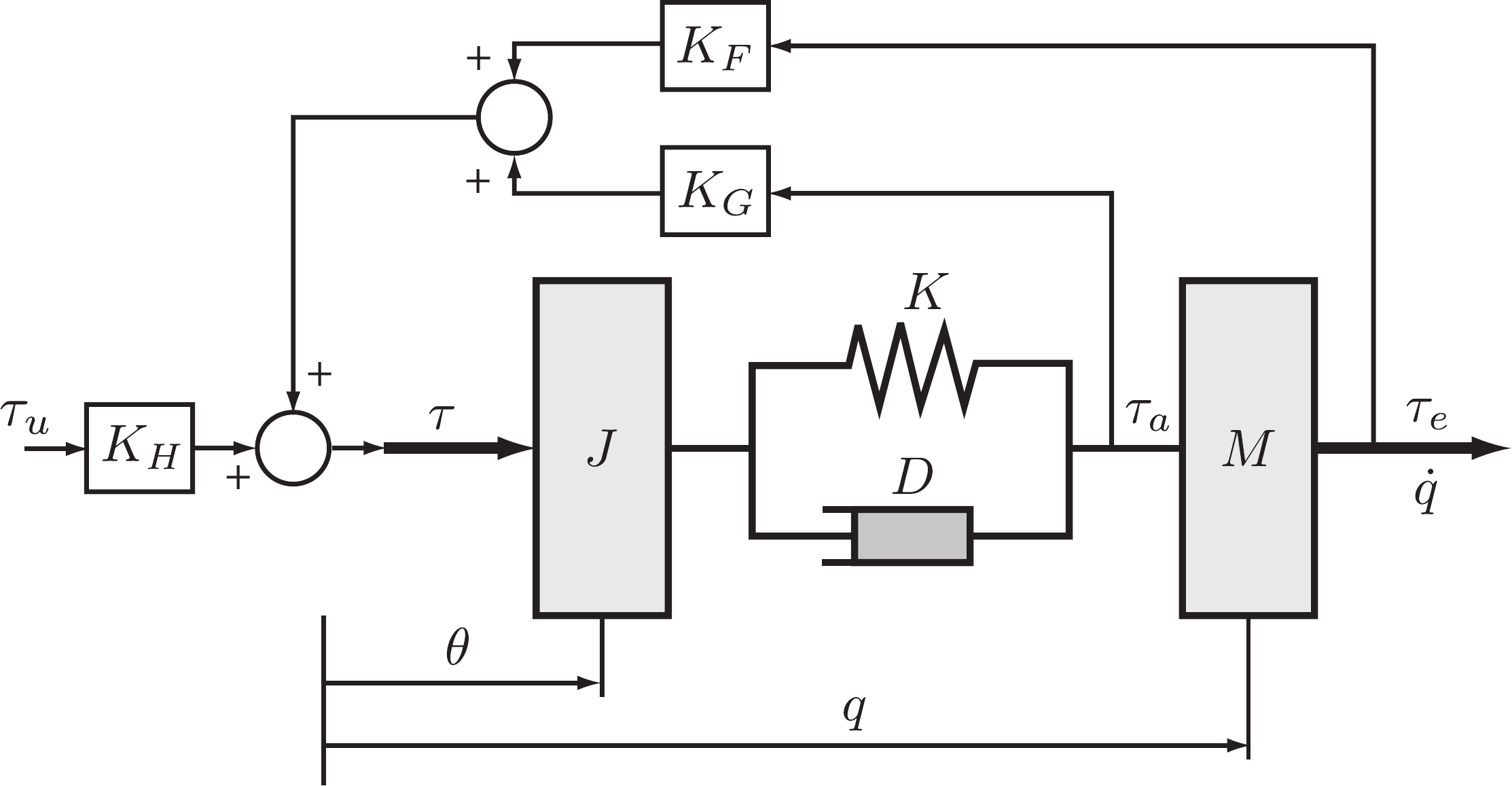}
\caption{Force feedback structure of the control system.}
\label{figcontrolsystem}
\end{center}
\end{figure}

The closed-loop system \eqref{phcl} can be equivalently written in the Euler-Lagrange form as follows
\begequarr \label{elcl1}
\hspace{-6mm} M \ddot{q} - K_e (\phiv-q) - D_e ( \dot{\phiv} -\dot{q} ) +\nabla_qV(q) \hspace{-2mm} &=& \hspace{-2mm} \tau_e \\
 J_e \ddot \phiv + K_e (\phiv-q) + D_e ( \dot{\phiv}-\dot{q}) \hspace{-2mm} &=&\hspace{-2mm}  \tau_u, \label{elcl2}
\endequarr
which can be thought to describe the motion equation of a mechanical system, where $J_e$, $K_e$ and $D_e$ are the shaped inertia, stiffness and dissipation matrices. Figure~\ref{figmeccl} shows a diagram of the close-loop system. Note that, due to the coordinate change $\theta \to \phiv$ given in \eqref{coc1}, the dynamics of open-loop and closed-loop systems are written in different coordinates.
\begin{figure}[htbp]
\begin{center}
\includegraphics[scale=.33]{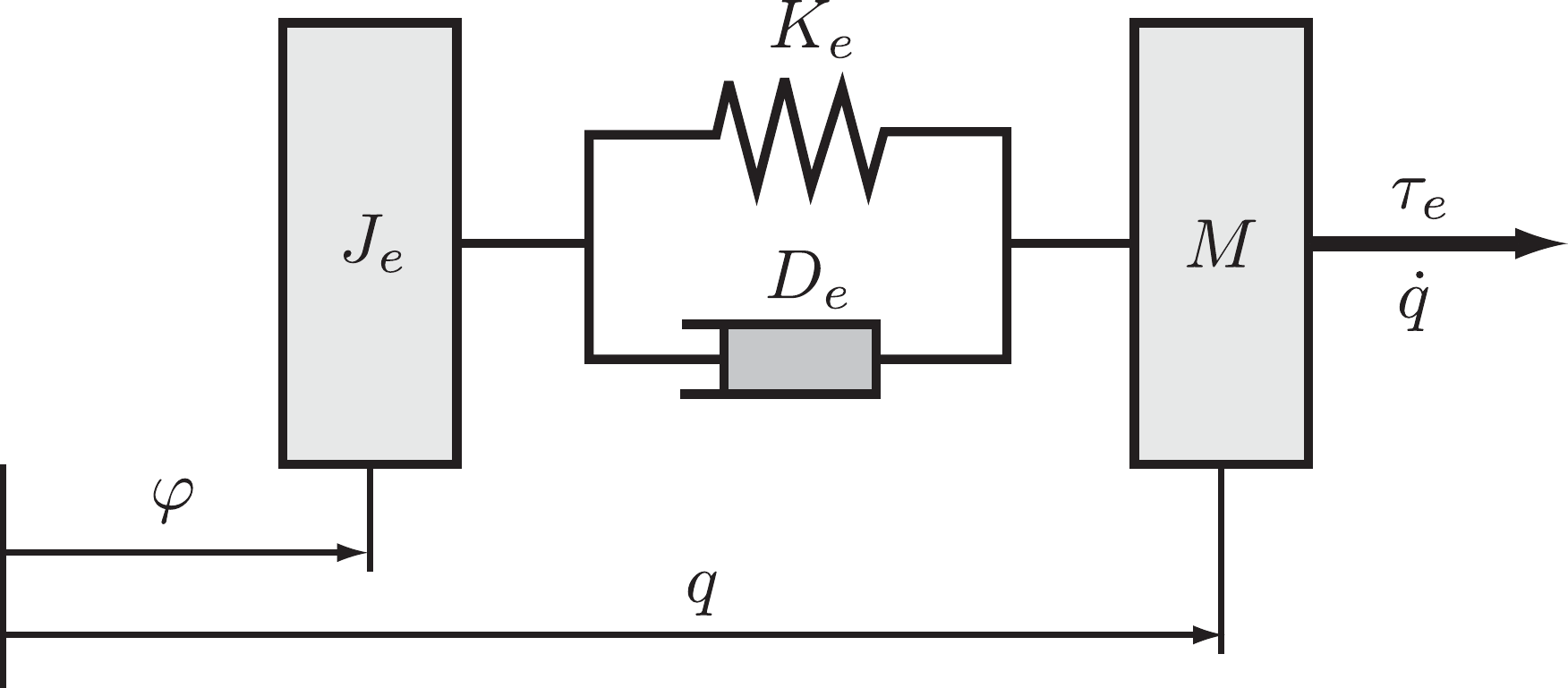}
\caption{Equivalent mechanical dynamics of the closed-loop system.}
\label{figmeccl}
\end{center}
\end{figure}

\begin{remark} We notice that the classical impedance control using force feedback for LTI SISO systems (see e.g. \cite{Colgate1989}) can be derived from proposition \ref{proplin}. The result discussed in \cite{Colgate1989} ensures that the system \eqref{mechl1}-\eqref{mechl4} with $n=1$ in closed loop with the controller 
$$
\tau=K_F \tau_e,
$$ 
with $K_F$ satisfying $-1<K_F< \frac J M$, defines a passive mapping $\tau_e \to \dot q$. Indeed, consider the control law \eqref{impcontrlin} with $K_G=0$. Then, using \eqref{kflin} and \eqref{kglin} we obtain
\begin{eqnarray}
J_e &=& \frac{(J-K_FM)}{K_F+1}, \quad
K_e = \frac{K}{J}(J-K_FM), \nonumber \\
D_e &=& \frac{D}{J}(J-K_FM), \nonumber
\end{eqnarray}
which satisfies $J_e,\,K_e,\,D_e\,>0$ iff $-1<K_F< \frac J M$. In addition, the closed-loop dynamics can be equivalently written in Euler-Lagrange form \eqref{elcl1}-\eqref{elcl2}.
However, notice that by selecting a non-zero matrix $K_G$ there is more freedom to choose the inertia, the stiffness and the dissipation matrices $J_e$, $K_e$ and $D_e$. These extra degree of freedom is achieved at the expense of using the measurement of joint torque $\tau_a$.
\end{remark}
%====================================================================
\subsection{Robot-human/environment interaction properties.}
%====================================================================
In this section, we present a stability analysis of a controlled flexible robot interacting similar to the analyses in \cite{Lacevic2011}. The analysis in \cite{Lacevic2011} considers the robot dynamics \eqref{mechl1}-\eqref{mechl4} with $n=1$ in closed loop with a PI controller and coupled with a dynamic system that represent the human or the environment. The impedance of the human/environment is assumed to be 
\begequarr \label{human1dof}
Z_h(s) = \frac{ \mathcal{L}[ \tau_h(t)]} {\mathcal{L}[\dot q(t)]} = M_h s  + D_h + \frac{K_h}{s}.
\endequarr 
Using this model, the authors in \cite{Lacevic2011} studied the stability in case of parameter uncertainties in the model \eqref{human1dof}. Thus, the mass, stiffness and dissipation coefficients have an uncertainty quantity as follows: $M_h=\bar M_h + \Delta_{M_h}$, $K_h=\bar K_h + \Delta_{K_h}$ and $D_h=\bar D_h + \Delta_{D_h}$. In addition, it is also assumed that $ M_h \in [0,M_{h}^{\max}]$, $K_h \in [0,K_{h}^{\max}]$ and $D_h \in [0,D_{h}^{\max}]$. It is shown in \cite{Lacevic2011} that, for the one-dimensional case, there always exists a PI controller on the external force that stabilise the system.

We consider now the controlled robot \eqref{phcl} and we compute its admittance $Y(s) = \frac{ \mathcal{L}[\dot q(t)] }{\mathcal{L}[\tau_e(t)]}$ as follows
\begequarr
Y(s) = \frac{J_e s^2 + D_e s + K_e}{s\left[ J_e M s^2 + D_e (J_e+M) s + K_e (J_e+M) \right]}.
\endequarr 
Under the same scenario as in \cite{Lacevic2011}, that is with the impedance of the human/environment as in \eqref{human1dof}, the interconnection of the robot and the human/environment results passive, thus stable. In what follows, we will extend this result and generalise it for the study the multi-dimensional case. To do that, we write, with some abuse of notation, the dynamics of the human/environment as\footnote{The abuse of notation comes from the fact that, to simplify the notation, we use the same parameters for the 1-dimensional case and for the n-dimensional case.}
\begequarr \label{humanndof}
\tau_h = M_h \ddot{q}  + D_h \dot{q} + K_h q.
\endequarr 
It is straightforward to show that the dynamics of the controlled robot \eqref{phcl} in physical interaction with the human/environment dynamics \eqref{humanndof} can be written in the form of Euler-Lagrange equations as follows
\begequarr
\hspace{-7mm} (M\!+\!M_h) \ddot{q}\! -\! K_e (\phiv-q) \!+ \!D_e ( \dot{\phiv} - \dot{q}) \!+ \!D_h \dot q \!+\! K_h q  \hspace{-3mm} &=&\hspace{-3mm} 0 \label{rhint1} \\
J_e \ddot \phiv \!+\! K_e (\phiv-q) \!+ \!D_e (\dot{\phiv}-\dot{q}) \hspace{-3mm}&=&\hspace{-3mm} \tau_u. \label{rhint2}
\endequarr
The interconnection of the controlled robot and the human/environment shown in Figure~\ref{figinteract} is power preserving, and therefore, the system \eqref{rhint1}-\eqref{rhint2} is passive provided that $M_h$, $K_h$, and $D_h$ are positive (semi-)definite. The additional control input $\tau_u$ may be used to ensure asymptotic stability of a desired equilibrium if required.
\begin{figure}[htbp]
\begin{center}
\includegraphics[scale=.315]{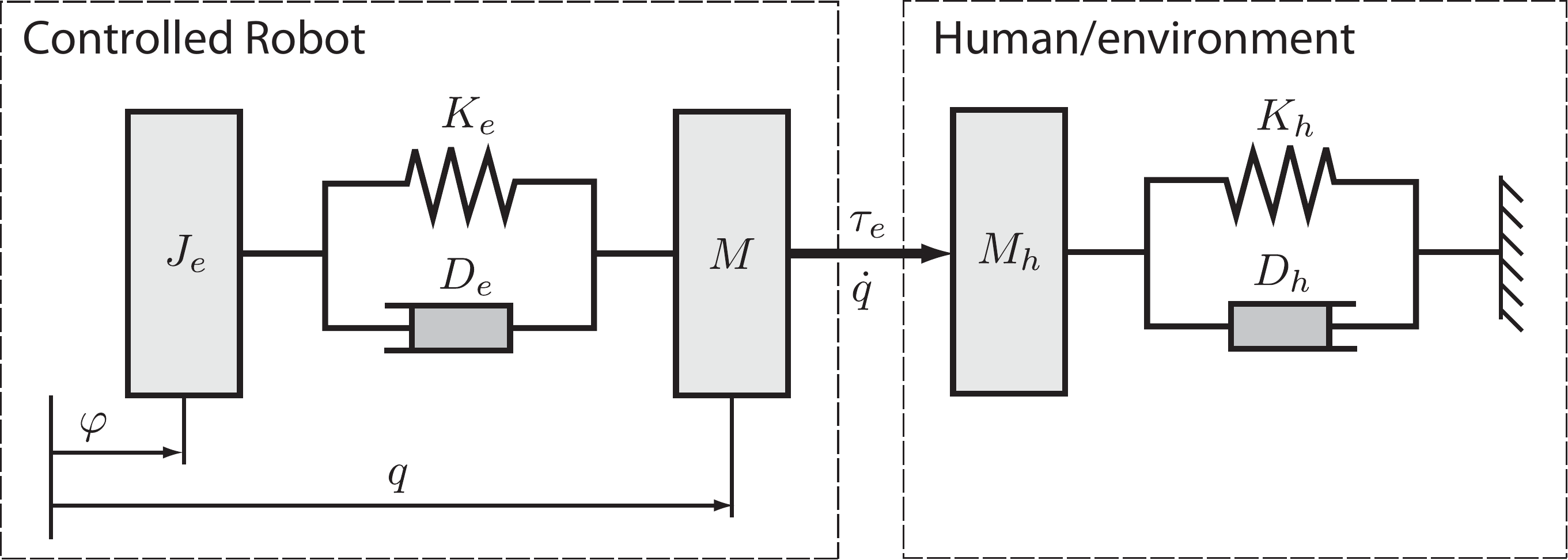}
\caption{Robot-human/environment interaction.}
\label{figinteract}
\end{center}
\end{figure}
%====================================================================
\subsection{Extension to nonlinear multivariable systems.}
%====================================================================
In this section, we extend the result of Proposition \ref{proplin} to the nonlinear case in which the mass matrix is a function of the coordinates. That is, we drop assumption {\bf A1}.

We consider the mass matrix as function of the coordinates $M(q)$, then \eqref{el1}--\eqref{el2} takes the pH form as follows\footnote{Since it is clear that in this part of the work the matrix $M$ is a function of $q$, and to simplify the notation, we do not write explicitly this dependency in the equations.}
\begequarr
\dq &=& M^{-1} p \label{mechnl1}\\ 
\dtheta &=& J^{-1} s \label{mechnl2} \\
\dotp &=& -\nabla_qV(q) + K (\theta-q) + D (J^{-1}s-M^{-1}p) \nonumber \\
&& - \frac 12 \nabla_q(p^\top M^{-1}p) +\tau_e  \label{mechnl3}  \\
\ds &=& - K (\theta-q) - D (J^{-1}s-M^{-1}p) +\tau \label{mechnl4}
\endequarr

\begin{prop} \label{propnonlin} Consider the robot dynamics \eqref{mechnl1}--\eqref{mechnl4} in closed loop with the control law
\begequarr 
\tau&=&K_F \; \tau_e - K_G \; \tau_{a} -  K_F\; C(q,\dq) \; \dot q \;  - K_F \nabla_qV(q)  \nonumber \\
&&  \;+\; K_H\;\tau_u , \label{impcontrnonlin}
\endequarr
where
$\tau_{a}$, $K_F$, $K_G$ and $K_H$ are given in \eqref{taua}, \eqref{kflin}, \eqref{kglin} and \eqref{khlin}, respectively, and $\tau_u$ is an input available for further control loops.
The constant matrices $J_e=J_e^\top>0$ and $K_e=K_e^\top>0$ are free to be chosen such that $D_e=DK^{-1}K_e$ is symmetric and positive definite. Then,
\begite
\item[$i).$] The closed loop dynamics can be written in pH form as follows
\begequarr \label{nphcl}
\left[ \begarr{c} \dq \\ \dphiv \\ \dotp \\ \dz \endarr \right] \hspace{-3mm}&=& \hspace{-3mm} \left[ \begarr{cccc} 0_{n\times n} & 0_{n\times n} & I_{n\times n} & 0_{n\times n} \\ 0_{n\times n} & 0_{n\times n} & 0_{n\times n} & I_{n\times n} \\ -I_{n\times n} & 0_{n\times n} & -D_e & D_e \\ 0_{n\times n} & -I_{n\times n} & D_e & -D_e \endarr \right] \nabla H_{\tt NCL} \nonumber \\
&& + \left[ \begarr{c} 0_{n\times 1} \\ 0_{n\times 1} \\ \tau_e \\ \tau_u \endarr \right]
\endequarr
with
\begequarr \label{nhcl}
H_{\tt NCL}&=&\frac 12 p^\top M^{-1}(q) p + \frac 12 z^\top J_e^{-1} z + \frac 12 (\phiv-q)^\top K_e \nonumber \\
&& \times (\phiv-q) +V(q).
\endequarr

The new generalised position $\phiv$ and momentum $z$ are obtained using the transformations
\begequarr
\hspace{-8mm} \phiv \hspace{-2mm}&=& \hspace{-2mm} K_e^{-1} (K_e-K)q+K_e^{-1}K\theta  \label{ncoc1} \\
\hspace{-8mm} z \hspace{-2mm}&=& \hspace{-2mm} J_eK_e^{-1}(K_e-K)M^{-1}p+J_eK_e^{-1}KJ^{-1}s \label{ncoc2}
\endequarr

\item[$ii).$] The closed loop is passive with inputs $(\tau_e,\tau_u)$, outputs $(\dq,\dphiv)$ and storage function $H_{\tt CL}$.
\endite
\end{prop}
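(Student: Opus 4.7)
The plan is to mirror the structure of the proof of Proposition~\ref{proplin}, verifying \eqref{nphcl} line-by-line in the new coordinates, with the only genuine new ingredient being the treatment of the Coriolis contribution in the $\dz$ equation. First I would observe that the coordinate transformations \eqref{ncoc1}--\eqref{ncoc2} together with $\nabla_p H_{\tt NCL} = M^{-1}(q)p$ and $\nabla_z H_{\tt NCL}=J_e^{-1}z$ give the first two rows of \eqref{nphcl} exactly as in \eqref{ee1} and \eqref{ee2}: the line for $\dq$ is immediate from \eqref{mechnl1}, and the line for $\dphiv$ follows from differentiating \eqref{ncoc1} and recognising the bracket as $J_e^{-1}z$.

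Next I would check the $\dotp$ equation. Starting from \eqref{mechnl3} and using the identities $K(\theta-q)=K_e(\phiv-q)$ and $D(J^{-1}s-M^{-1}p)=D_e(J_e^{-1}z-M^{-1}p)$, which were established in the linear proof and depend only on the definitions of $\phiv$, $z$ and $D_e=DK^{-1}K_e$, the open-loop expression for $\dotp$ becomes
\begequarr
\dotp &=& -\nabla_q V(q) - \tfrac{1}{2}\nabla_q(p^\top M^{-1}p) + K_e(\phiv-q) \nonumber \\
&& + D_e(J_e^{-1}z - M^{-1}p) + \tau_e. \nonumber
\endequarr
Since $\nabla_q H_{\tt NCL} = \tfrac{1}{2}\nabla_q(p^\top M^{-1}p) - K_e(\phiv-q) + \nabla_q V(q)$, this matches the third line of \eqref{nphcl}; the extra quadratic-in-momentum term $-\tfrac{1}{2}\nabla_q(p^\top M^{-1}p)$ in \eqref{mechnl3} is \emph{absorbed} into the new Hamiltonian gradient rather than being cancelled by the controller.

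The main step is the $\dz$ equation, and this is where I expect the bulk of the calculation. I would avoid the product rule on $M^{-1}(q)p$ and instead use $M^{-1}p=\dq$ to write $\dz = J_eK_e^{-1}(K_e-K)\ddq + J_eK_e^{-1}KJ^{-1}\ds$, then substitute $M\ddq = -C(q,\dq)\dq - \nabla_q V(q) + \tau_a + \tau_e$ and $J\ddtheta = -\tau_a + \tau$ obtained from \eqref{el1}--\eqref{el2}, and finally substitute the control law \eqref{impcontrnonlin}. Grouping the result by the three driving terms $(\tau_e - C\dq - \nabla_q V)$, $\tau_a$ and $\tau_u$ produces exactly the three coefficient brackets that appear in the linear proof \eqref{ee4}: the first bracket vanishes by \eqref{kflin}, the second equals $-I_n$ by \eqref{kglin}, and the third equals $I_n$ by \eqref{khlin}. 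The key point, and the only real departure from the linear case, is that the controller term $-K_F C(q,\dq)\dq$ couples $C\dq$ to $\tau_e$ and $\nabla_q V$ through the same coefficient $J_eK_e^{-1}(K_e-K)M^{-1} + J_eK_e^{-1}KJ^{-1}K_F$, which is precisely the coefficient that is forced to zero by the definition of $K_F$. Thus the Coriolis contribution to $\ddq$ is cancelled automatically, and we are left with $\dz = -\tau_a + \tau_u = -K_e(\phiv-q) - D_e(J_e^{-1}z - M^{-1}p) + \tau_u$, matching the fourth row of \eqref{nphcl}.

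Finally, statement $ii)$ follows as in the linear proof: differentiating \eqref{nhcl} along \eqref{nphcl} and using the skew-symmetric/positive-semi-definite decomposition of the interconnection-damping matrix, every internal term either cancels (skew part) or appears as $-[\nabla_p H_{\tt NCL}^\top\ \nabla_z H_{\tt NCL}^\top]\,\mathrm{blkdiag\text{-style}}\begin{bmatrix}D_e&-D_e\\-D_e&D_e\end{bmatrix}\,[\nabla_p H_{\tt NCL}^\top\ \nabla_z H_{\tt NCL}^\top]^\top \leq 0$, yielding $\dot H_{\tt NCL} \leq \dq^\top \tau_e + \dphiv^\top \tau_u$. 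The main obstacle throughout is purely bookkeeping in the $\dz$ calculation: the algebraic identities from the linear case must be reused verbatim after recognising that the Coriolis term plays the role of an additional external forcing that the $-K_F C(q,\dq)\dq$ correction cancels together with the $\tau_e$ and $\nabla_q V$ terms.
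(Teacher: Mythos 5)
Your proposal is correct and follows exactly the route the paper intends: the paper's own ``proof'' of Proposition~\ref{propnonlin} merely states that it repeats the procedure of Proposition~\ref{proplin} with lengthier algebra, and your argument is precisely that procedure carried out, with the two genuinely new points handled correctly (the $-\tfrac{1}{2}\nabla_q(p^\top M^{-1}p)$ term being absorbed into $\nabla_q H_{\tt NCL}$, and the $-K_F C(q,\dq)\dq$ controller term entering the $\dz$ computation through the same coefficient bracket as $\tau_e$ and $\nabla_q V$, which vanishes by \eqref{kflin}). Your device of writing $\dz$ via $\ddq$ and $\ddtheta$ to avoid differentiating $M^{-1}(q)p$ is a clean way to organise the bookkeeping and is consistent with the paper's construction.
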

\begin{proof} The proof of this proposition follows a similar procedure as the proof of Proposition \ref{proplin}. Since the presence of the nonlinear terms makes the proof much longer due to algebraic lengthy calculations and to comply with the space limitation, we do not repeat the same procedure here. We highlight that the proof of proposition 2 does not rely in any extra additional technical step other than lengthy calculations.
\end{proof}

\begin{remark} We notice that the controller for the linear and nonlinear cases, \eqref{impcontrlin} and \eqref{impcontrnonlin} respectively, differ only in a term to compensates Coriolis forces. This extra term vanishes when the mass matrix $M$ is constant. Therefore, the result in Proposition \ref{propnonlin} can be seen as a generalisation of Proposition \ref{proplin} and of the classical result for LTI SISO systems \cite{Colgate1989}. The price to pay for this extension is the necessity of state information that is feedback to the controller \eqref{impcontrnonlin}. 
\end{remark}
%=============
\subsection{Gravity compensation} 
%=============
The controllers in Proposition~\ref{proplin} and Proposition~\ref{propnonlin} are designed such that the closed-loop dynamics preserve the mechanical structure. Therefore, the closed loop \eqref{nphcl} can be equivalently written in the Euler-Lagrange form
\begequarr 
M(q) \ddot{q}\!+\!C(q,\dot q) \dot q\! - \!K_e (\phiv-q) \!- \!D_e ( \dot{\phiv}-\dot{q}) \!+\!\nabla_qV(q) \hspace{-3mm}&=& \hspace{-3mm} \tau_e \nonumber \\ %\label{sel1} \\
J_e \ddot \phiv \!+ \!K_e (\phiv-q) \!+\! D_e (\dot{\phiv}-\dot{q}) \hspace{-3mm} &=& \hspace{-3mm} \tau_u, \nonumber %\\ \label{sel2} 
\endequarr
which is equivalent to the dynamic of a mechanical system with $J_e$, $K_e$ and $D_e$ being the shaped inertia, stiffness and dissipation matrices. The matrix $C(q,\dot q)$ describes the Coriolis forces and $V(q)$ the potential function. The structure of the dynamics is the same as the structure of the mechanical system in \cite{Ott2008}. Therefore, we can use the gravity compensation proposed in that work and readily added it to our controller using the control law
\begin{equation} \label{gcomp}
\tau_u=-K_\phiv (\phiv-\phiv_d) - D_\phiv \dot \phiv + \bar g (\phiv),
\end{equation}
where $\phiv_d$ is the desired positon, $K_\phiv$ and $D_\phiv$ are constant positive matrices to be chosen, and $\bar g$ is the gravity compensation. Notice that the first two terms in \eqref{gcomp} are added to help shaping the admittance (see further details in \cite{Ott2008}).
%====================================================================
\section{NUMERICAL ANALYSIS}\label{secna}
%====================================================================
%=============
\subsection{Constant mass matrix} 
%=============
In this section, we investigate the effect of the control gains on the stability of the full-closed loop and its approximation to the desired admittance via Bode-diagram and the pole-zero maps. For this numerical analysis, we consider the system \eqref{el1}-\eqref{el2} in closed loop with the controller \eqref{impcontrlin} in Proposition \ref{proplin}, with $\tau_u$ as in \eqref{gcomp}. Since the motion is horizontal, the gravity forces are not considered. The parameters of the system with $n=1$ are $M=3$Kg, $J=3$Kg, $K=10^6$Nm$^{-1}$ and $D=1$Nm$^{-1}$s. The parameters of the desired system \eqref{targdyn} are $M_d=3$Kg, $K_d=10$Nm$^{-1}$ and $D_d=100$Nm$^{-1}$s. The gains of the controller are $K_\varphi=100$Nm$^{-1}$, $D_\varphi=10$Nm$^{-1}$s, whilst remaining gains take the following values $K_F=\{-0.9,0,0.9 \}$ and  $K_G=\{0,1,4 \}$. The change in $K_F$ and $K_G$ allow us to analysis their effect in the closed loop.

Figures~\ref{bodekf-09}-\ref{bodekf09} shows that the frequency response of the closed loop approximates better the frequency response of the desired admittance when the values of the gain $K_F$ are chosen towards $\frac{J}{M}=1$. We recall that $-1<K_F<\frac{J}{M}$ to ensure that the closed loop is passive \cite{Colgate1989}. It can also be seen that an increment on the gain $K_G$ improves the closed-loop performance. Indeed, it is clear in Figures~\ref{bodekf-09}, Figure~\ref{bodekf00} and Figure~\ref{bodekf09} that the frequency response of the closed-loop admittance approaches the frequency response of the desired admittance as $K_G$ increases. 
\begin{figure}[htbp]
\begin{center}
\includegraphics[scale=.45]{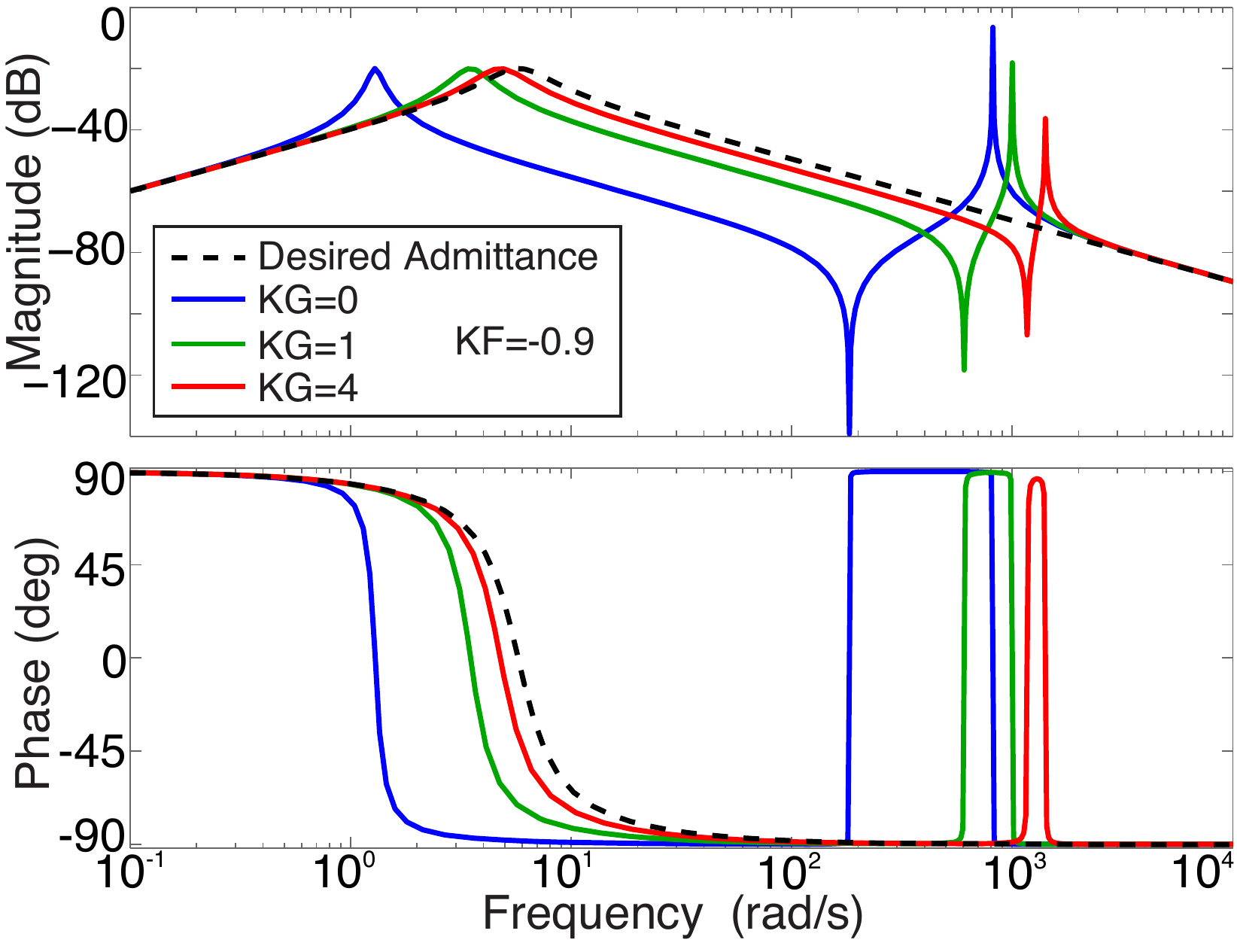}
\caption{Closed-loop Bode diagram for different control gains.}
\label{bodekf-09}
\end{center}
\end{figure}
\begin{figure}[htbp]
\begin{center}
\includegraphics[scale=.45]{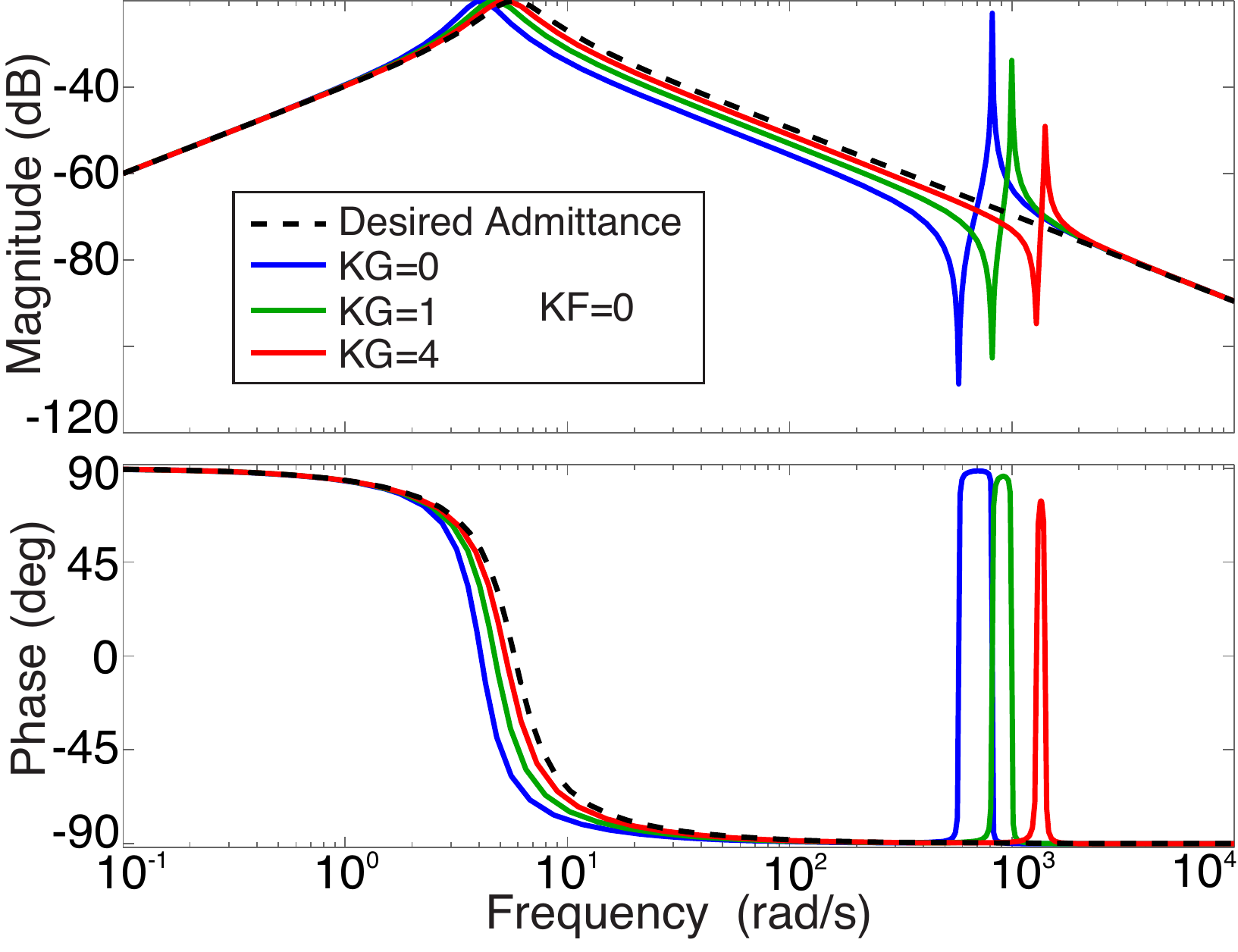}
\caption{Closed-loop Bode diagram for different control gains.}
\label{bodekf00}
\end{center}
\end{figure}
\begin{figure}[htbp]
\begin{center}
\includegraphics[scale=.45]{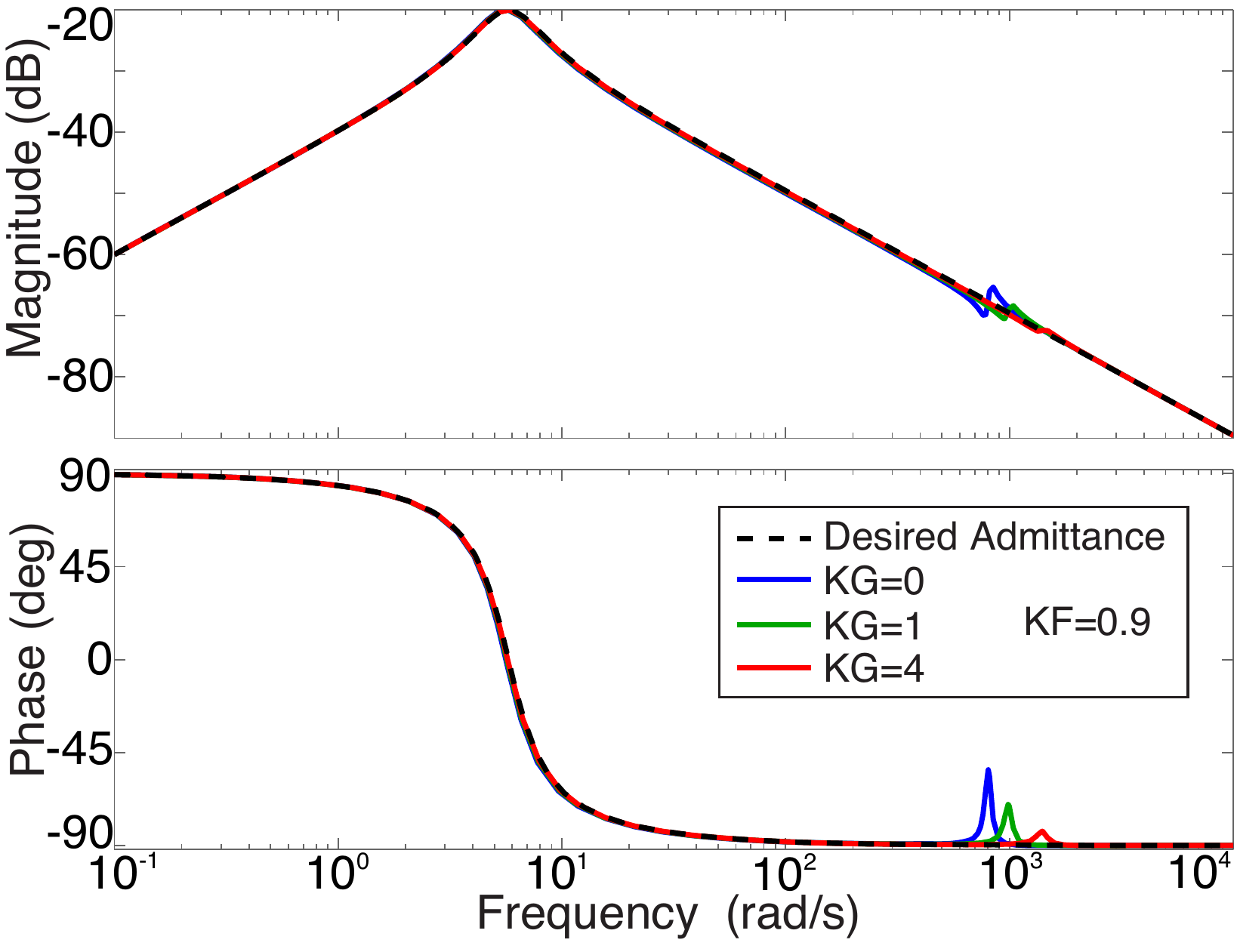}
\caption{Closed-loop Bode diagram for different control gains.}
\label{bodekf09}
\end{center}
\end{figure}
Figure~\ref{zpandzoom} shows the pole-zero map of the closed loop for different values of the gains $K_F$ and $K_G$. The full view of the pole-zero map is shown in the upper plot of Figure~\ref{zpandzoom}, while the bottom plot show a zoom around the origin to better visualise the pole distribution in this region. The desired admittance \eqref{targadm} has two poles and a zero, which are displayed in the bottom plot of Figure~\ref{zpandzoom}. In the same plot illustrate that the pole of the closed-loop admittance approaches the poles of the desired admittance when both $K_F$ and $K_G$ increase. Also, the upper plot of Figure~\ref{zpandzoom} evidence that the extra zero and pole that appear in the closed-loop admittance are better compensated when both $K_F$ and $K_G$ increase. We can infer that the feedback of both the external and join forces enhance the performance of the close loop.
\begin{figure}[htbp]
\begin{center}
\includegraphics[scale=.35]{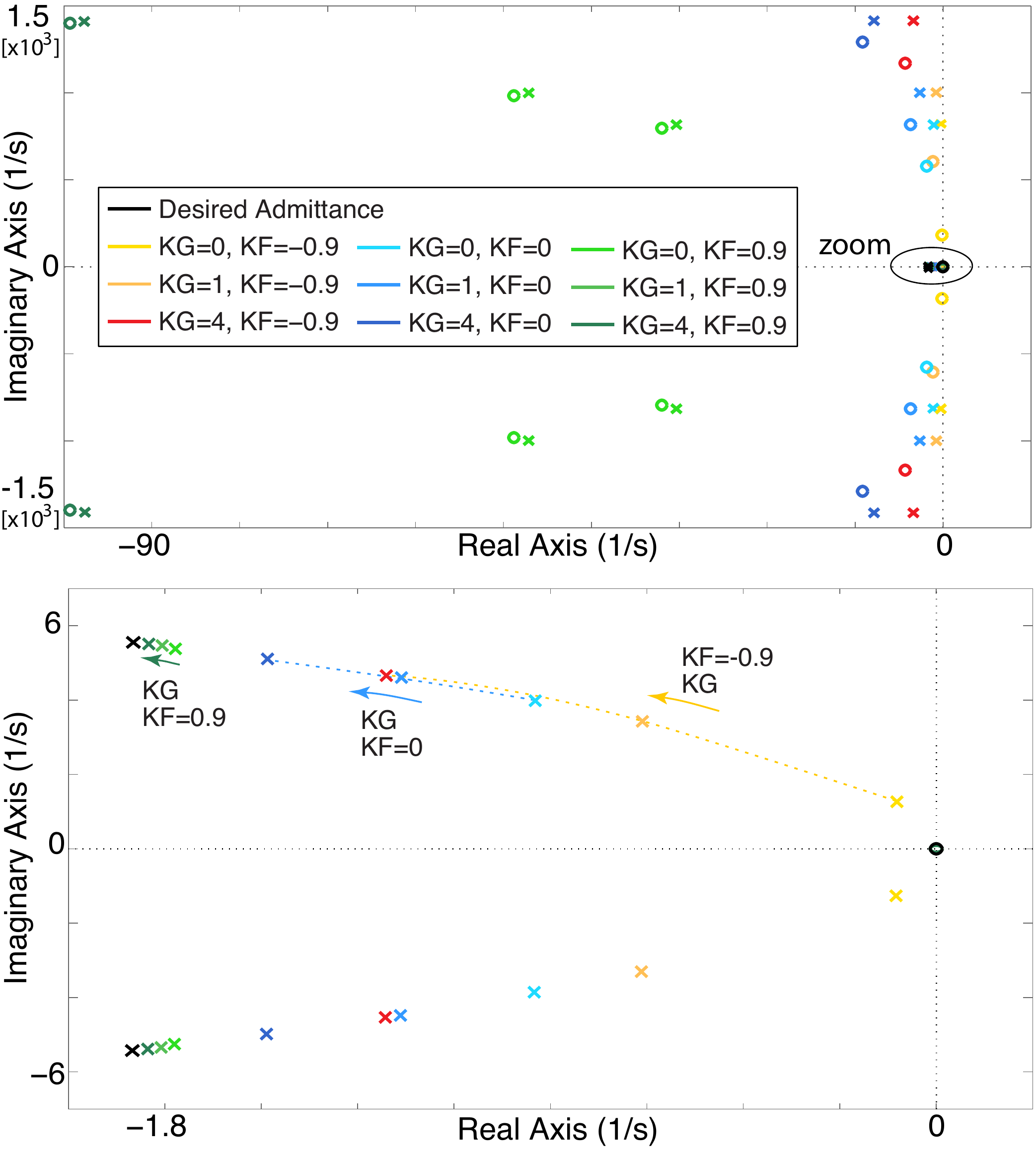}
\caption{Robot-environment interaction.}
\label{zpandzoom}
\end{center}
\end{figure}
%=============
\subsection{Non-constant mass matrix}

In this section, present simulation of a nonlinear model of a robotic manipulator to show the performance of the controller presented in Proposition 2. We consider the dynamic model of the KUKA-LWR4+ Robot, whose dynamics  can be described by equations \eqref{el1}--\eqref{el2} with $D=0$ (see for example \cite{Gaz2014}). We use the control law \eqref{impcontrnonlin} of Proposition 2, which is suitable for nonlinear dynamics, with $\tau_u$ as in \eqref{gcomp}.

In this scenario, we consider the target dynamics
\begequarr
M(q) \ddot{q} + (C(q,\dot{q})+D_{\theta})  \dot{q} + K_{\theta} (q - q_d) = \tau_e
\label{targetdyn}
\endequarr
The desired stiffness and damping matrices are $K_{\theta}= \text{diag} \lbrace 1000,1000,1000,1000,1000,1000\rbrace$ N$\cdot$m/rad and $D_{\theta} =  \text{diag} \lbrace 135, 135, 135 ,135, 13.5, 13.5 ,13.5 \rbrace$  N$\cdot$m$\cdot$s/rad.

The controller is tuned to obtain the performance that approaches the target dynamics, and we selected the following values: $K_e = 2 K$, $D_e = DK^{-1}K_e = 0$, $K_{\varphi} = K_{\theta} $ and $D_{\varphi}=D_{\theta}$.

The simulation shows the response of the closed-loop to a step-wise excitation in the external torque, $\tau_e=10$ n$\cdot$m, exerted on the second joint. Figure~\ref{fig:don_target_q} shows the time history of the coordinate of the second joint for different values of the control gain $J_e$. The figure also shows the time history of the second coordinate of the target dynamics \eqref{targetdyn}. The simulation shows that the used of smaller values of $J_e$ (equivalently larger values of $K_G$) respect to the original inertia of the motor $J$ improves the time response in regard to the target dynamics. This is in agreement with the analysis for linear system in previous section. Figure~\ref{fig:don_target_torque} show the only time history of the control torque in the second joint, which is sufficiently smooth and does not present excessive values. 
\begin{figure}[htbp]
\begin{center}
\includegraphics[scale=.65]{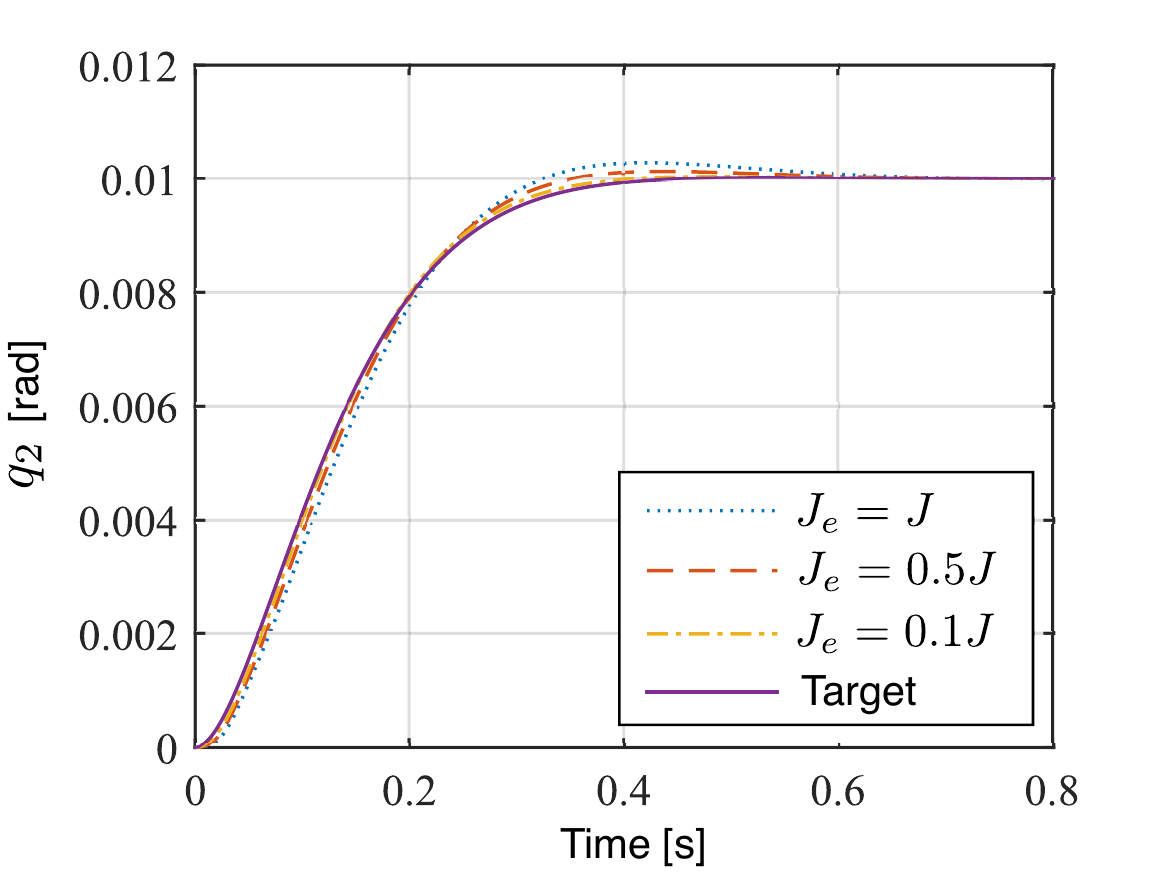}
\caption{Time history of the coordinate of the second joint .}
\label{fig:don_target_q}
\end{center}
\end{figure}
\begin{figure}[htbp]
\begin{center}
\includegraphics[scale=.65]{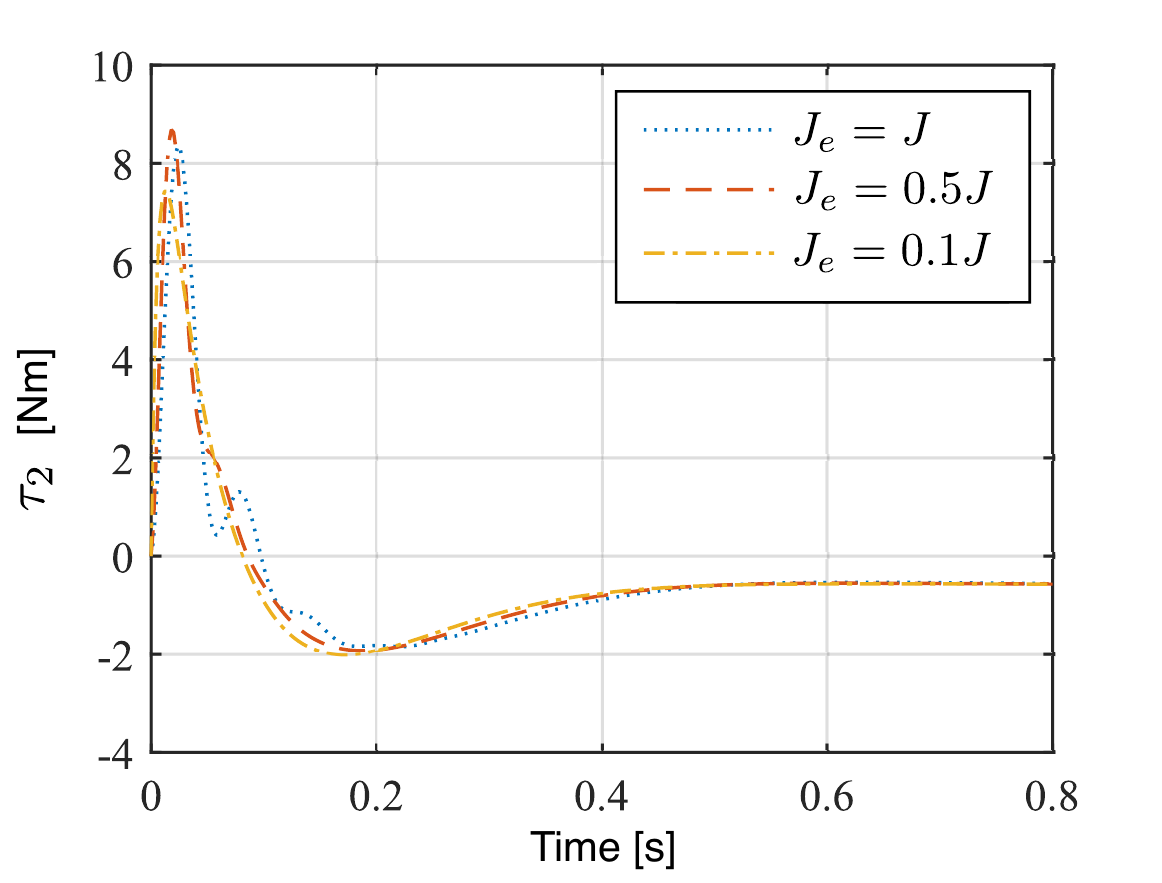}
\caption{Time history of the control torque of the second joint.}
\label{fig:don_target_torque}
\end{center}
\end{figure}

\section{CONCLUSIONS}\label{conclusion}
%%%%%%%%%%
We present an impedance control design for multi-variable linear and nonlinear robotic systems. The proposed design takes advantage of the external force and state measurements. This allows the controller for extra gains and flexibility that allow to approximation of the desired admittance or target dynamics. Additionally, we show that the closed loop system ensures passivity of the robot for the input-output pair given by the external torque and the link velocities. We also present a numerical analysis and simulation results that show performance of the controller and the effect of the tuning gains in the time response.
%\addtolength{\textheight}{-12cm}   % This command serves to balance the column lengths
                                  % on the last page of the document manually. It shortens
                                  % the textheight of the last page by a suitable amount.
                                  % This command does not take effect until the next page
                                  % so it should come on the page before the last. Make
                                  % sure that you do not shorten the textheight too much.

%%%%%%%%%%%%%%%%%%%%%%%%%%%%%%%%%%%%%%%%%%%%%%%%%%%%%%%%%%%%%%%%%%%%%%%%%%%%%%%%

%%%%%%%%%%%%%%%%%%%%%%%%%%%%%%%%%%%%%%%%%%%%%%%%%%%%%%%%%%%%%%%%%%%%%%%%%%%%%%%%

%\section*{ACKNOWLEDGMENT}

%%%%%%%%%%%%%%%%%%%%%%%%%%%%%%%%%%%%%%%%%%%%%%%%%%%%%%%%%%%%%%%%%%%%%%%%%%%%%%%%

\bibliographystyle{IEEEtran}        % Include this if you use bibtex 
\bibliography{impedcontrol}           % and a bib file to produce the 

\end{document}